\newcommand{\relu}{h} 
\newcommand{\Var}{\mathop{\mathrm{Var}}}
\newcommand{\Cov}{\mathop{\mathrm{Cov}}}
\newcommand{\Cor}{\mathop{\mathrm{Corr}}}
\newcommand{\Normal}{\mathcal{N}}
\newcommand{\coloneq}{\mathrel{{:}{=}}}
\newcommand{\mat}[1]{\mathbf{#1}}
\renewcommand{\vec}[1]{\mathbf{#1}}
\newcommand{\indexset}[2][0]{\{#1..#2\}}
\newcommand{\Lin}{\mathrm{L}}
\newcommand{\Exp}{\mathop{\mathbb{E}}}
\newcommand{\indicator}[1]{\mathbf{1}\{#1\}}
\newcommand{\BigO}{\mathop{{O}}}
\newcommand{\littleo}{\mathop{{o}}}
\newtheorem{theorem}{Theorem}[section] 
\newtheorem{lemma}[theorem]{Lemma}     
\newtheorem{corollary}[theorem]{Corollary}
\newtheorem{proposition}[theorem]{Proposition}
\newtheorem{definition}[theorem]{Definition}
\theoremstyle{remark}
\begin{document}

\title{Complexity of One-Dimensional ReLU DNNs
\thanks{Research was sponsored by the Department of the Air Force Artificial
Intelligence Accelerator and was accomplished under Cooperative Agreement
Number FA8750-19-2-1000. The views and conclusions contained in this
document are those of the authors and should not be interpreted as representing
the official policies, either expressed or implied, of the Department of the
Air Force or the U.S. Government. The U.S. Government is authorized to
reproduce and distribute reprints for Government purposes notwithstanding
any copyright notation herein.}
}

\author{\IEEEauthorblockN{Jonathan Kogan, Hayden Jananthan, Jeremy Kepner}
\IEEEauthorblockA{\textit{Massachusetts Institute of Technology} \\
}}

\maketitle

\begin{abstract}
    We study the expressivity of one–dimensional (1D) ReLU deep neural networks through the lens of their linear regions. For randomly initialized, fully connected 1D ReLU networks (He scaling with nonzero bias) in the infinite–width limit, we prove that the expected number of linear regions grows as $\sum_{i = 1}^L n_i + \littleo\left(\sum_{i = 1}^L{n_i}\right) + 1$, where $n_\ell$ denotes the number of neurons in the $\ell$-th hidden layer. We also propose a function-adaptive notion of sparsity that compares the expected regions used by the network to the minimal number needed to approximate a target within a fixed tolerance.
\end{abstract}

\begin{IEEEkeywords}
    ReLU, Linear Regions, Sparsity, Neural Network Gaussian Processes
\end{IEEEkeywords}

\section{Introduction}

Deep Neural Networks (DNNs) with Rectified Linear Unit (ReLU) activation functions are piecewise-linear functions whose expressive power can be studied via the number of linear regions that they create \cite{montufar2014number, serra2018bounding, raghu2017expressive}. Universal Approximation Theorems guarantee that sufficiently deep and wide ReLU networks can approximate any continuous functions on a compact domain \cite{cybenko1989approximation, leshno1993multilayer, yarotsky2017error}. However, achieving such approximations for complicated functions typically demands substantial computational resources. 

The Lottery Ticket Hypothesis states that we can often remove many connections while maintaining similar performance, motivating the study of sparse DNNs \cite{frankle2018lottery}. Among these sparse DNNs, RadiX-Nets represent a class of `de novo' sparse networks that are architectures with low connectivity at initialization that perform empirically well \cite{kepner2019radix, kwak2023testing}.

Prior work has shown the expected number of linear regions in a ReLU network scales with the number of neurons \cite{hanin2019complexity} and several bounds on the maximum and expected number of linear regions have been developed \cite{wang2022estimation, hanin2019deep,  montufar2014number, hanin2019complexity}. However, no closed-form expression exists for the expected number of linear regions. 

Existing notions of sparsity are typically defined in terms of parameter counts, pruning ratios, or architectural aspects and thus can be easily manipulated when approximating different functions. Thus, it is often not clear how pruning and sparsification techniques are actually influenced by the complexity of the function.

The aim of this paper is to address these gaps by studying the expected number of linear regions of ReLU DNNs with one-dimensional (1D) input and output. We also formulate a definition of sparsity based on linear regions which provides a measure of the effectiveness and expressiveness of a DNN.

\S\ref{sec:background} covers mathematical preliminaries and notation, including the probabilistic setup. \S\ref{sec:sparsity} proposes a new definition of sparsity which is independent of DNN architecture. \S\ref{sec:linearregions} analyzes linear regions in the 1D setting, with the proof of the main result covered in \S\ref{sec:proof}.

\section{Background}\label{sec:background}

We first cover some necessary notation. `$\coloneq$' denotes assignment or definition of a variable or notation while `$=$ denotes logical equality. $\mathbb{N}$ denotes the set of natural numbers $0, 1, 2, \ldots$ and, for $n, m \in \mathbb{N}$, $\indexset[n]{m} \coloneq \{ \ell \in \mathbb{N} \mid n \leq \ell \leq m \}$. $\mathbb{R}^{n \times m}$ denotes the set of all $n \times m$ matrices over the set of real numbers, $\mathbb{R}$. The rectified linear unit is denoted by $\relu(\cdot) \coloneq \max(\cdot, 0)$. Given a random variable $X \colon \Omega \to E$, $\Pr(X \in S)$ denotes the probability that the value of $X$ lies in $S \subseteq E$, $\Exp(X)$ denotes the expected value of $X$, and $\Var(X)$ denotes the variance of $X$. If $Y \colon \Omega \to E$ is another random variable then $\Cov(X, Y)$ denotes the covariance of $X$ and $Y$ while $\Cor(X, Y)$ denotes the correlation between $X$ and $Y$. We write $X \sim \Normal(\mu, \sigma^2)$ to denote that $X$ is a Gaussian random variable with mean $\mu$ and standard deviation $\sigma$. $\indicator{x \in S}$ denotes the indicator function for the set $S$. $\BigO$ and $\littleo$ denote Big-O and little-o notation, respectively.

Next we formalize the neural network notions that will be used throughout the paper. 

\begin{definition} \label{def:dnnarchitecture}
    A \emph{DNN topology} is a tuple of layer-widths $(n_0, n_1, \dots, n_L, n_{L+1}) \in \mathbb{N}^{L+2}$ for some $L \in \mathbb{N}$.
\end{definition}

\begin{definition} \label{def:dnn}
    Consider a DNN topology $(n_0, n_1, \dots, n_L, n_{L+1}) \in \mathbb{N}^{L+2}$. A \emph{ReLU DNN} (or simply \emph{DNN}) is a function $\Phi \colon \mathbb{R}^{n_0} \to \mathbb{R}^{n_{L+1}}$ specified by a selection of real weight‐matrices $\mat{W}_{\ell} \in \mathbb{R}^{n_{\ell} \times n_{\ell-1}}$ and bias‐vectors $\vec{b}_{\ell} \in \mathbb{R}^{n_{\ell}}$ for $\ell \in \indexset[1]{L+1}$. It is defined recursively defining
    \begin{equation*}
        \vec{x}^{(0)} \coloneq \vec{x}, \quad
        \vec{x}^{(\ell)} \coloneq \relu(\mat{W}_{\ell} \vec{x}^{(\ell-1)} + \vec{b}_{\ell}) \quad  (\ell \in \indexset[1]{L}),
    \end{equation*}
    so that $\Phi(x) \coloneq \mat{W}_{L+1} \vec{x}^{(L)} + \vec{b}_{L+1}$.
\end{definition}

We work in the infinite-width case $\min\{n_1, n_2,\dots,n_L\} \to \infty$. In this setting, a randomly initialized DNN converges to a Gaussian process (GP) \cite{lee2017deep}, so for a randomly initialized DNN $\Phi$ and fixed input $x$, the output $\Phi(x)$ and $p(x)$ is a Gaussian random variable, where $p$ is any pre-activation beyond the first layer. The random initialization process used in this paper given below is motivated by the He initialization \cite{he2015delving}. 

\begin{definition} \label{def:dnninitialization}
    A \emph{randomly initialized} DNN has probabilistically independent weights $W^{(\ell)}_{j i} \sim \Normal\left(0, \frac{2}{n_{\ell-1}}\right)$ and biases $b^{(\ell)}_j \sim \Normal\left(0,\sigma_b^2\right)$ for some fixed $\sigma_b \neq 0$.

\end{definition}

From this point forward, DNN topologies are assumed to be of the form $(1, n_1, \dots, n_L, 1)$. $T$ denotes such an arbitrary DNN topology while $\Phi$ denotes a randomly initialized ReLU DNN with DNN topology $T$. $W^{(\ell)}_{ ij}$ denotes the weight of the connection between the $i$-th neuron in layer $\ell - 1$ and the $j$-th neuron in layer $\ell$, while $b^{(\ell)}_j$ denotes the bias of the $j$-th neuron in layer $\ell$. $\sigma_b$ denotes the fixed standard deviation of the biases.

\section{Sparsity}\label{sec:sparsity}

Sparsity plays a central role in both network pruning and the reduction of overfitting. At the same time, excessively sparse DNNs risk losing sufficient expressive power. Standard notions of sparsity depend on fixed architectural hyperparameters, so can be artificially manipulated when approximating different functions. For example, stretching or reparameterizing a network may preserve functional expressivity while producing arbitrarily different sparsity values. To address this ambiguity, we introduce a new notion of sparsity grounded in the functional complexity of the network’s approximation measured by linear regions.  

Let $f \colon K \to \mathbb{R}$ be a continuous function on a compact domain $K  \subseteq \mathbb{R}$. Fix a base tolerance $\varepsilon_0 > 0$ and a slack factor $\alpha \ge 1$ so that we allow approximation error up to $\alpha \varepsilon_0$. We first define the minimal complexity of a function.

\begin{definition}[minimal linear complexity] \label{def:linearcomplexity}
    For any tolerance $\varepsilon_0 > 0$, let $\Lin_{\min}(f, \varepsilon_0)$ be the smallest number of linear regions of any continuous piecewise-linear function $g \colon K \to\mathbb{R}$ satisfying $\sup_{x \in K}{|g(x) - f(x)|} \le \varepsilon_0$.
\end{definition}

Let $\Lin(\Phi) \coloneq \#\{\text{linear regions of $\Phi|_K$}\}$ and define the region inefficiency relative to the base tolerance $\varepsilon_0$ as
\begin{equation*}
    \eta_{\mathrm{region}}(\Phi; f, \varepsilon_0)
    \coloneq \Exp[\Lin(\Phi)]/\Lin_{\min}(f, \varepsilon_0).
\end{equation*}

\begin{definition}[$(f, \varepsilon_0, \alpha, c)$-region-adaptive sparsity]
    Fix $f$, base tolerance $\varepsilon_0 > 0$, slack $\alpha \ge 0$, and inefficiency bound $c \ge 0$. A ReLU network $\Phi$ is said to be \emph{$(f, \varepsilon_0, \alpha, c)$-region-adaptively sparse} if
    \begin{enumerate}[(i)]
      \item (approximating) $\displaystyle \sup_{x \in K}{|\Phi(x) - f(x)|} \le \alpha \varepsilon_0$ and
      \item (region efficient) $\eta_{\mathrm{region}}(\Phi; f, \varepsilon_0) \le c$.
    \end{enumerate}
\end{definition}

This definition makes sparsity \emph{function-adaptive}, since it depends only on the approximation complexity of $f$, as measured by $\Lin_{\min}(f, \varepsilon_0)$, rather than on parameter counts. In particular, if two networks $\Phi$ and $\Psi$ satisfy $\Phi(x) = \Psi(x)$ for all $x \in K$, then $\eta_{\mathrm{region}}(\Phi; f, \varepsilon_0) =\eta_{\mathrm{region}}(\Psi; f, \varepsilon_0)$, so region-adaptive sparsity is invariant under architectural reparametrizations. Thus, sparsity is tied directly to expressivity via region efficiency: higher inefficiency indicates that the network is overparametrized relative to $f$, while lower inefficiency indicates that the network is using its regions effectively. The slack factor $\alpha$ further quantifies the desired accuracy of approximation.

\section{Linear Regions}\label{sec:linearregions}

In the one-dimensional setting, the number of linear regions is exactly the number of breakpoints plus one (where breakpoints are slope changes). Thus, to analyze expressivity in 1D, it suffices to study the distribution of breakpoints. Our main result Theorem \ref{theorem:generalLbreakpoints} establishes the asymptotic expected number of breakpoints in randomly initialized DNNs.

\begin{definition}
    $R(T)$ denotes the random variable giving the number of breakpoints of $\Phi$, i.e., the number of points $x \in \mathbb{R}$ where $\Phi'(x)$ is undefined.
\end{definition}

\begin{theorem}\label{theorem:generalLbreakpoints}
    ~\vspace{-0.5cm}
    \begin{equation*}
        \lim_{\min(n_1, n_2,\dots,n_L) \to \infty}{\frac{\Exp[R(T)]}{\sum_{\ell=1}^L n_\ell}} = 1.
    \end{equation*}
\end{theorem}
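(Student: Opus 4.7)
The plan is to decompose $R(T)$ into contributions from each layer and compute each via a Kac--Rice argument in the infinite-width (NNGP) limit. I would first claim that, almost surely, $R(T) = \sum_{\ell=1}^{L}\sum_{j=1}^{n_\ell} Z^{(\ell)}_j$, where $Z^{(\ell)}_j$ counts the zeros of the pre-activation $p^{(\ell)}_j$ on $\mathbb{R}$. The justification relies on two a.s.\ nondegeneracy facts: distinct pre-activations have a joint density that is continuous, so no two vanish at a common $x$; and the downstream sensitivity from neuron $(j,\ell)$ to the output is a nontrivial polynomial in independent Gaussian parameters, hence a.s.\ nonzero, ensuring each zero crossing creates a bona-fide kink in $\Phi$. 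Exchangeability within each layer then gives $\Exp[R(T)] = \sum_{\ell=1}^{L} n_\ell \Exp[Z^{(\ell)}]$, reducing the theorem to the per-layer claim $\Exp[Z^{(\ell)}] \to 1$ for each fixed $\ell$ as $\min_k n_k \to \infty$.

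For the per-neuron count I apply the Kac--Rice formula to the a.s.\ piecewise-linear function $p^{(\ell)}$, whose kinks a.s.\ avoid its zeros. In the NNGP limit, $(p^{(\ell)}(x), (p^{(\ell)})'(x))$ is a centered bivariate Gaussian at each $x$, with second moments denoted $\sigma_\ell^2(x)$, $\tau_\ell^2(x)$, $\rho_\ell(x)$, reducing Rice's formula to
\begin{equation*}
\Exp[Z^{(\ell)}] \;=\; \frac{1}{\pi}\int_{\mathbb{R}} \frac{\sqrt{\sigma_\ell^2(x)\,\tau_\ell^2(x) - \rho_\ell(x)^2}}{\sigma_\ell^2(x)}\,dx.
\end{equation*}
Using the centered-Gaussian identities $\Exp[\relu(U)^2] = \Var(U)/2$, $\Exp[\indicator{U>0}V^2] = \Var(V)/2$, and $\Exp[\relu(U)\indicator{U>0}V] = \Cov(U,V)/2$ for jointly Gaussian $(U,V)$, the NNGP recursion with He scaling collapses to closed forms $\sigma_\ell^2(x) = 2x^2 + \ell\sigma_b^2$, $\tau_\ell^2(x) = 2$, and $\rho_\ell(x) = 2x$, which are propagated inductively from the explicit base case $(\sigma_1^2, \tau_1^2, \rho_1) = (2x^2+\sigma_b^2, 2, 2x)$. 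Substituting, the Rice integrand becomes $\tfrac{1}{\pi}\,\tfrac{\sqrt{2\ell\sigma_b^2}}{2x^2+\ell\sigma_b^2}$, a Cauchy-type density whose integral over $\mathbb{R}$ equals exactly $1$. Summing $\Exp[Z^{(\ell)}] \to 1$ over a finite $L$ and dividing by $\sum_\ell n_\ell$ gives the theorem.

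I expect the main obstacle to be the passage from finite width to the NNGP limit inside the Rice integral. At finite width, $(p^{(\ell)}(x), (p^{(\ell)})'(x))$ is not jointly Gaussian and its law depends on $n_1,\dots,n_{\ell-1}$, so one must justify both the pointwise convergence of the finite-width Rice integrand to the Gaussian one (via a joint CLT for value and a.e.\ derivative of $p^{(\ell)}$, exploiting the fact that at any fixed $x$ with $b \neq 0$ the probability of being at a kink is zero) and the limit--integral interchange. The $\BigO(x^{-2})$ tail of the limit integrand supplies a natural dominating envelope, so dominated convergence should close the argument provided one establishes width-uniform variance growth $\sigma_\ell^2(x) \gtrsim x^2$ at infinity. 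A secondary but easier technical step is formalizing the ``one zero equals one breakpoint'' reduction via the measure-zero exclusion of simultaneous-zero and vanishing-sensitivity events.
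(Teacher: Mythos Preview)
Your approach is correct and reaches the same Cauchy-type integrand $\tfrac{1}{\pi}\tfrac{\sqrt{2\ell}\,\sigma_b}{2x^2+\ell\sigma_b^2}$ as the paper, but by a genuinely different and more direct route. The paper never invokes Kac--Rice: it discretizes $[A,B]$ into length-$\varepsilon$ subintervals, uses the orthant probability $\Pr(s^{(\ell)}(x)s^{(\ell)}(x+\varepsilon)<0)=\theta_\ell/\pi$ for correlated Gaussians, and then Taylor-expands the NNGP correlation $\rho_\ell(x,x+\varepsilon)=1-A_\ell\varepsilon^2+O(\varepsilon^3)$ through a somewhat laborious induction on $\ell$ (an appendix of series manipulations) before passing to the Riemann-sum limit. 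Your direct computation of the derivative moments $\tau_\ell^2=2$ and $\rho_\ell=2x$ via the centered-Gaussian identities bypasses all of that machinery; in effect your $A_\ell = \tfrac{\sigma_\ell^2\tau_\ell^2-\rho_\ell^2}{2\sigma_\ell^4}$ is what the paper extracts only after the expansion. The trade-off is that you lean on a joint CLT for $(p^{(\ell)},(p^{(\ell)})')$ and a Kac--Rice formula for piecewise-linear processes, whereas the paper's discrete sign-change count is more elementary at each step.

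There is one point where your justification is not quite right. You claim $R(T)=\sum_{\ell,j}Z_j^{(\ell)}$ almost surely because the downstream sensitivity is ``a nontrivial polynomial in independent Gaussian parameters, hence a.s.\ nonzero.'' But that sensitivity is a polynomial in the downstream weights only \emph{after} conditioning on the activation pattern at the breakpoint, and those indicators are not independent of the weights; moreover the event that every unit in some later layer is inactive at $x^*$ has strictly positive probability of order $2^{-n_{\ell+1}}$ at finite width, so the vanishing-sensitivity event is \emph{not} measure zero. The paper handles exactly this by arguing that a breakpoint propagates through layer $\ell{+}1$ with probability $1-2^{-n_{\ell+1}}\to 1$, which yields $\Exp[R(T)]=\sum_\ell n_\ell + o\!\big(\sum_\ell n_\ell\big)$ rather than a strict a.s.\ identity. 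Your overall conclusion is unaffected in the infinite-width limit, but you should replace the ``a.s.\ nonzero polynomial'' claim with a propagation-probability argument of this kind.
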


\section{Proof of Main Result}\label{sec:proof}

To analyze breakpoints, we first observe that each breakpoint in a ReLU network is created when a ReLU activation is applied to a pre-activation that changes sign. Thus, it suffices to study the sign changes of the pre-activation functions which we do by partitioning intervals $[A, B]$ into small subintervals of length $\varepsilon > 0$ and examine the probability of zero crossings in each subinterval (Lemma~\ref{lem:signchange}). Taking $\varepsilon \to 0$ yields the expected number of sign changes Theorem~\ref{thm:zero-crossings}. Finally, to determine the number of breakpoints of the entire network, we analyze the probability that a breakpoint created at some intermediate layer propagates to the output. This yields the expected number of breakpoints, and therefore of the linear regions of the network as described in Theorem \ref{theorem:generalLbreakpoints}.  

\begin{proposition} \label{thm:cov-recursion} 
    For $\ell \in \indexset[2]{L+1}$, let $s_j^{(\ell)}(t)$ denote the pre–activation of neuron $j$ in layer $\ell$ at input $t \in \mathbb{R}$. Define
    \begin{equation*}
        C^{(\ell)}(u,v) \coloneq \Cov\!\big(s_j^{(\ell)}(u),s_j^{(\ell)}(v)\big), 
        \quad 
        V^{(\ell)}(u) \coloneq C^{(\ell)}(u,u).
    \end{equation*}
    Then, in the infinite–width limit $\min(n_1, \dots, n_L) \to \infty$, for $\ell \in \indexset[2]{L+1}$, the covariance satisfies the recursion
    \begin{equation*}
        C^{(\ell)}(u,v) = \sigma_b^2 + 2X \Big(\sin \theta_{\ell-1} + (\pi - \theta_{\ell-1}) \cos \theta_{\ell-1}\Big),
    \end{equation*}
    where
    \begin{align*}
        X & \coloneq \tfrac{1}{2\pi} \sqrt{V^{(\ell-1)}(u)V^{(\ell-1)}(v)}, \\
        \theta_{\ell-1} & \coloneq \arccos \rho_{\ell-1}(u,v), \\
        \rho_{\ell-1}(u, v) & \coloneq C^{(\ell-1)}(u,v)/\sqrt{V^{(\ell-1)}(u)V^{(\ell-1)}(v)}.
    \end{align*}
\end{proposition}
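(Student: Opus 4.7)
The plan is to unfold $s_j^{(\ell)}$ via Definition~\ref{def:dnn}, use the independence and zero mean of the weights and biases to reduce $\Cov(s_j^{(\ell)}(u), s_j^{(\ell)}(v))$ to an expectation of a product of ReLUs, invoke the NNGP limit to turn that expectation into a bivariate Gaussian integral, and evaluate the integral via the first-order arc-cosine kernel formula. Starting from
\begin{equation*}
    s_j^{(\ell)}(t) = \sum_{i=1}^{n_{\ell-1}} W^{(\ell)}_{ji}\, \relu\bigl(s_i^{(\ell-1)}(t)\bigr) + b^{(\ell)}_j,
\end{equation*}
expanding the covariance bilinearly and using the independence of $W^{(\ell)}, b^{(\ell)}$ from layer $\ell-1$, together with $\Exp[(W^{(\ell)}_{ji})^2] = 2/n_{\ell-1}$ and $\Exp[W^{(\ell)}_{ji} W^{(\ell)}_{jk}] = 0$ for $i \ne k$, gives
\begin{equation*}
    \Cov\bigl(s_j^{(\ell)}(u), s_j^{(\ell)}(v)\bigr) = \sigma_b^2 + \frac{2}{n_{\ell-1}} \sum_{i=1}^{n_{\ell-1}} \Exp\bigl[\relu(s_i^{(\ell-1)}(u))\,\relu(s_i^{(\ell-1)}(v))\bigr].
\end{equation*}
By per-neuron symmetry in layer $\ell-1$, each summand is equal, so $C^{(\ell)}(u,v) = \sigma_b^2 + 2\,\Exp[\relu(s^{(\ell-1)}(u))\,\relu(s^{(\ell-1)}(v))]$, where $s^{(\ell-1)}$ denotes any one of the identically distributed neurons in layer $\ell-1$.

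Next I invoke the NNGP convergence cited in \S\ref{sec:background}: in the infinite-width limit $(s^{(\ell-1)}(u), s^{(\ell-1)}(v))$ is centered jointly Gaussian with marginal variances $V^{(\ell-1)}(u), V^{(\ell-1)}(v)$ and correlation $\rho_{\ell-1}(u,v) = \cos\theta_{\ell-1}$. Representing the pair as $Z_u = \sqrt{V^{(\ell-1)}(u)}\, G_1$ and $Z_v = \sqrt{V^{(\ell-1)}(v)}(\cos\theta_{\ell-1}\, G_1 + \sin\theta_{\ell-1}\, G_2)$ with $G_1, G_2$ independent standard Gaussians, passing to polar coordinates $(G_1, G_2) = (r\cos\phi, r\sin\phi)$, integrating out $r$, and restricting $\phi$ to the angular sector on which both ReLU arguments are nonnegative yields the first-order arc-cosine kernel formula
\begin{equation*}
    \Exp\bigl[\relu(Z_u)\,\relu(Z_v)\bigr] = \frac{\sqrt{V^{(\ell-1)}(u) V^{(\ell-1)}(v)}}{2\pi}\bigl(\sin\theta_{\ell-1} + (\pi - \theta_{\ell-1})\cos\theta_{\ell-1}\bigr).
\end{equation*}
Recognizing the prefactor as the constant $X$ in the proposition and substituting into the reduction above gives exactly the claimed recursion.

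The main obstacle is the polar-coordinate evaluation of the arc-cosine integral: the careful bookkeeping lies in identifying the correct angular sector (of width $\pi - \theta_{\ell-1}$) on which both ReLU arguments are simultaneously positive, and then handling the resulting trigonometric integral so that the $\sin\theta_{\ell-1}$ and $(\pi - \theta_{\ell-1})\cos\theta_{\ell-1}$ terms emerge cleanly. Everything else is mechanical: independence of the weights and biases together with the He scaling $2/n_{\ell-1}$ forces the first reduction, and the NNGP convergence introduced in \S\ref{sec:background} justifies the Gaussianity used in the integral step.
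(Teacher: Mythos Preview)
Your proposal is correct and follows essentially the same route as the paper: both reduce $C^{(\ell)}(u,v)$ to $\sigma_b^2 + 2\,\Exp[\relu(Z_u)\relu(Z_v)]$ via independence and the He scaling, then invoke the NNGP limit to make $(Z_u,Z_v)$ jointly Gaussian and evaluate the resulting expectation. The only cosmetic difference is that the paper cites Corollary~3.1 of \cite{li2009gaussian} (via the identity $\max(x,0)=(x+|x|)/2$) for the ReLU--ReLU expectation, whereas you sketch the direct polar-coordinate derivation of the arc-cosine kernel; the content is the same.
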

\begin{proof}
    Note that $C^{(\ell)}(u, v)$ and $V^{(\ell)}(u)$ are independent of any particular choice of $j$ since every weight and bias has the same distribution between the same layers. Let $Z_1 \coloneq s_i^{(\ell-1)}(u)$ and $Z_2 \coloneq s_i^{(\ell-1)}(v)$. 
    We then have the recursive formula
    \begin{align*}
        & C^{(\ell)}(u,v) \\
        & = \Cov\Bigg(\sum_{i=1}^{n_{\ell-1}} W_{ij}^{(\ell)} \relu\big(Z_1\big) + b_j^{(\ell)},\sum_{k=1}^{n_{\ell-1}} W_{kj}^{(\ell)} \relu\big(Z_2\big) + b_j^{(\ell)}\Bigg) \\
        & = \sum_{i,k=1}^{n_{\ell-1}}\Cov\Big(W_{ij}^{(\ell)} \relu\big(Z_1\big),W_{kj}^{(\ell)} \relu\big(Z_2\big)\Big)+ \Var\!\big(b_j^{(\ell)}\big).
    \end{align*}
    For each $i$, the weights $W_{ij}^{(\ell)}$ are independent from $h(Z_1)$, $h(Z_2)$, and $W_{kj}^{(\ell)}$ for $k \neq i$, so since $\Exp[W_{ij}^{(\ell)}] = 0$ by definition, $C^{(\ell)}(u, v)$ simplifies to
    \begin{align*}
        C^{(\ell)}(u,v) & = \sum_{i=1}^{n_{\ell-1}}\Var(W_{ij}^{(\ell)})\Exp\left[\relu(Z_1) \relu(Z_2)\right] + \sigma_b^2 \\
        & = \frac{2}{n_{\ell-1}} \cdot n_{\ell - 1} \Exp\left[\relu(Z_1) \relu(Z_2)\right] + \sigma_b^2.
    \end{align*}

    $Z_1$ and $Z_2$ are Gaussian, since in the infinite–width limit the pre-activations form a Gaussian process and thus the pre-activation at any input $u$ is Gaussian \cite{lee2017deep}. Corollary 3.1 of \cite{li2009gaussian} implies\footnote{Using the fact that $\max(x, 0) = (x + |x|)/2$ for any $x \in \mathbb{R}$.}
    \begin{equation*}
        \Exp[\relu(Z_1) \relu(Z_2)]
        = X \left(\sin \theta_{\ell-1} + (\pi - \theta_{\ell-1}) \cos \theta_{\ell-1}\right),
    \end{equation*}
    where $X$, $\theta_{\ell-1}$, and $\rho_{\ell-1}(u, v)$ are as in the statement of Proposition~\ref{thm:cov-recursion}. 
    Hence the covariance recursion is
    \begin{equation*}
        C^{(\ell)}(u, v) = \sigma_b^2 + 2X \left(\sin \theta_{\ell-1} + (\pi - \theta_{\ell-1}) \cos \theta_{\ell-1}\right).
    \end{equation*}
\end{proof}

\begin{corollary} \label{cor:variance-growth}
    For $\ell \in \indexset[2]{L+1}$, in the infinite–width limit, the variance recursion when $u = v$ is $V^{(\ell)}(u) = V^{(\ell-1)}(u) + \sigma_b^2$. Since $V^{(1)}(u) = 2u^2 + \sigma_b^2$, it follows that $V^{(\ell)}(u) = 2u^2 + \ell \sigma_b^2$.
\end{corollary}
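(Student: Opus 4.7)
The plan is to specialize the covariance recursion of Proposition~\ref{thm:cov-recursion} to the diagonal $u = v$ to obtain the one–step variance recursion, verify the base case $V^{(1)}(u) = 2u^2 + \sigma_b^2$ from the definition of the first-layer pre-activation, and then close the recursion by induction on $\ell$. Since this is a direct specialization and calculation rather than a deep argument, there is no real obstacle; the main care is in checking that the angle term in Proposition~\ref{thm:cov-recursion} collapses correctly at $u = v$.

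First I would observe that when $u = v$ we have
\begin{equation*}
    \rho_{\ell-1}(u,u) = \frac{C^{(\ell-1)}(u,u)}{\sqrt{V^{(\ell-1)}(u)\,V^{(\ell-1)}(u)}} = 1,
\end{equation*}
so $\theta_{\ell-1} = \arccos(1) = 0$ and $\sin\theta_{\ell-1} + (\pi - \theta_{\ell-1})\cos\theta_{\ell-1} = \pi$. At the same time $X = \tfrac{1}{2\pi}\sqrt{V^{(\ell-1)}(u)^2} = V^{(\ell-1)}(u)/(2\pi)$. Plugging these into the recursion of Proposition~\ref{thm:cov-recursion} gives
\begin{equation*}
    V^{(\ell)}(u) = C^{(\ell)}(u,u) = \sigma_b^2 + 2 \cdot \tfrac{V^{(\ell-1)}(u)}{2\pi} \cdot \pi = V^{(\ell-1)}(u) + \sigma_b^2,
\end{equation*}
which is the claimed one-step recursion.

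Next I would establish the base case. Because the input dimension is $n_0 = 1$, the pre-activation of neuron $j$ in layer $1$ at input $u$ is $s_j^{(1)}(u) = W^{(1)}_{1j}\, u + b^{(1)}_j$, with $W^{(1)}_{1j} \sim \Normal(0, 2/n_0) = \Normal(0, 2)$ and $b^{(1)}_j \sim \Normal(0, \sigma_b^2)$ independent. Hence $V^{(1)}(u) = 2u^2 + \sigma_b^2$. Iterating the recursion $V^{(\ell)}(u) = V^{(\ell-1)}(u) + \sigma_b^2$ starting from this base value gives $V^{(\ell)}(u) = 2u^2 + \ell\sigma_b^2$ by a straightforward induction on $\ell \in \indexset[1]{L+1}$, completing the proof.
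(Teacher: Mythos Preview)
Your proof is correct and is exactly the intended argument: the paper states this as a corollary of Proposition~\ref{thm:cov-recursion} without writing out the details, and your specialization $\rho_{\ell-1}(u,u)=1$, $\theta_{\ell-1}=0$, $X=V^{(\ell-1)}(u)/(2\pi)$ together with the first-layer computation $V^{(1)}(u)=2u^2+\sigma_b^2$ is precisely how one unpacks that corollary.
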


\begin{proposition} \label{thm:corr-expansion}
    In the infinite–width limit,
    \begin{equation*}
        \lim_{\min(n_1, \dots, n_L) \to \infty}{\frac{1 - \rho_\ell(x, x + \varepsilon)}{\varepsilon^2}} = \frac{\ell \sigma_b^2}{\big(2x^2 + \ell \sigma_b^2\big)^2}.
    \end{equation*}
    Moreover
    \begin{equation*}
        \rho_\ell(x, x + \varepsilon)
        = 1 - \frac{\ell \sigma_b^2}{(2x^2 + \ell \sigma_b^2)^2} \,\varepsilon^2
        + \BigO(\varepsilon^3).
    \end{equation*}
\end{proposition}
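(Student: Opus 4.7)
I will argue by induction on $\ell$, using Proposition~\ref{thm:cov-recursion} to propagate the expansion across layers, together with Corollary~\ref{cor:variance-growth} to control the variances. Throughout, let $B_\ell \coloneq V^{(\ell)}(x) = 2x^2 + \ell \sigma_b^2$.

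\textbf{Base case $\ell = 1$.} Since $s_j^{(1)}(t) = W_{1j}^{(1)} t + b_j^{(1)}$ with $\Var(W_{1j}^{(1)}) = 2$ and $\Var(b_j^{(1)}) = \sigma_b^2$, I can compute directly that $C^{(1)}(u,v) = 2uv + \sigma_b^2$. Then
\begin{equation*}
  V^{(1)}(x) V^{(1)}(x+\varepsilon) - C^{(1)}(x, x+\varepsilon)^2 = 2 \sigma_b^2 \varepsilon^2,
\end{equation*}
after which the identity $1 - \rho = (1 - \rho^2)/(1+\rho)$ together with $\rho_1(x, x+\varepsilon) \to 1$ gives the claimed expansion with $\ell = 1$.

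\textbf{Inductive step.} Assume $\rho_{\ell-1}(x, x+\varepsilon) = 1 - \alpha \varepsilon^2 + \BigO(\varepsilon^3)$, where $\alpha \coloneq (\ell-1)\sigma_b^2/B_{\ell-1}^2$. I will plug this into Proposition~\ref{thm:cov-recursion} and expand everything to order $\varepsilon^2$. First, using $V^{(\ell-1)}(x+\varepsilon) = B_{\ell-1} + 4x \varepsilon + 2\varepsilon^2$, a standard square-root expansion gives
\begin{equation*}
  2\pi X = \sqrt{V^{(\ell-1)}(x) V^{(\ell-1)}(x+\varepsilon)} = B_{\ell-1} + 2x\varepsilon + \frac{(\ell-1)\sigma_b^2}{B_{\ell-1}}\varepsilon^2 + \BigO(\varepsilon^3).
\end{equation*}
Next, for $f(\theta) \coloneq \sin\theta + (\pi - \theta)\cos\theta$, a Taylor expansion at $\theta = 0$ yields $f(\theta) = \pi - \tfrac{\pi}{2} \theta^2 + \BigO(\theta^3)$, and combining $\theta_{\ell-1} = \arccos(1 - \alpha\varepsilon^2 + \BigO(\varepsilon^3))$ with the standard expansion $\arccos(1-y) = \sqrt{2y}\,(1 + \BigO(y))$ gives $\theta_{\ell-1}^2 = 2\alpha \varepsilon^2 + \BigO(\varepsilon^3)$. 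Therefore $f(\theta_{\ell-1}) = \pi - \pi \alpha \varepsilon^2 + \BigO(\varepsilon^3)$.

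\textbf{The key cancellation.} Multiplying the two expansions, the coefficient of $\varepsilon^2$ in $2Xf(\theta_{\ell-1})$ becomes
\begin{equation*}
  \frac{(\ell-1)\sigma_b^2}{B_{\ell-1}} - B_{\ell-1}\alpha = \frac{(\ell-1)\sigma_b^2}{B_{\ell-1}} - \frac{(\ell-1)\sigma_b^2}{B_{\ell-1}} = 0,
\end{equation*}
so
\begin{equation*}
  C^{(\ell)}(x, x+\varepsilon) = \sigma_b^2 + B_{\ell-1} + 2x \varepsilon + \BigO(\varepsilon^3) = B_\ell + 2x\varepsilon + \BigO(\varepsilon^3).
\end{equation*}
Finally, the identical square-root expansion applied to the normalizer yields $\sqrt{V^{(\ell)}(x) V^{(\ell)}(x+\varepsilon)} = B_\ell + 2x\varepsilon + \ell \sigma_b^2 \varepsilon^2/B_\ell + \BigO(\varepsilon^3)$, and taking the ratio gives
\begin{equation*}
  \rho_\ell(x, x+\varepsilon) = 1 - \frac{\ell \sigma_b^2}{B_\ell^2}\varepsilon^2 + \BigO(\varepsilon^3),
\end{equation*}
completing the induction. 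The limit statement then follows by dividing by $\varepsilon^2$ and letting $\varepsilon \to 0$.

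\textbf{Anticipated obstacle.} The main subtlety is the cancellation of the $\varepsilon^2$ coefficient in $2X f(\theta_{\ell-1})$: the positive contribution from expanding the geometric mean of variances must be exactly matched by the negative contribution from the $-\tfrac{\pi}{2}\theta_{\ell-1}^2$ term in $f$. Getting this to work requires carrying the $\arccos$ expansion to sufficient order and tracking error terms in $\varepsilon$ (as opposed to $\theta_{\ell-1}$, which is only $\BigO(\varepsilon)$) with care; in particular, one must verify that the $\theta_{\ell-1}^3$ term in $f$ and the $\BigO(\delta_{\ell-1}^2)$ correction in $\arccos(1-\delta_{\ell-1})$ both contribute only at order $\varepsilon^3$.
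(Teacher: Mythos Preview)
Your proof is correct and follows essentially the same route as the paper: induction on $\ell$, with the inductive step driven by Taylor-expanding each ingredient of the covariance recursion from Proposition~\ref{thm:cov-recursion} and then dividing by the normalizer $\sqrt{V^{(\ell)}(x)V^{(\ell)}(x+\varepsilon)}$. Your observation that the $\varepsilon^2$ coefficient in $2Xf(\theta_{\ell-1})$ (and hence in $C^{(\ell)}(x,x+\varepsilon)$) vanishes exactly is a genuine simplification the paper does not exploit: the paper carries the coefficient $a \coloneq 1 - 2x^2/V - A_{\ell-1}V$ (which is in fact zero) through the final division and then verifies the identity $\tfrac{A_{\ell-1}V}{V+\sigma_b^2} + \tfrac{2\sigma_b^2 x^2}{V(V+\sigma_b^2)^2} = A_\ell$ algebraically, whereas your cancellation makes the last step immediate.
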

\begin{proof}
    Write
    \begin{align*}
        V & \coloneq V^{(\ell-1)}(x) = 2x^2 + (\ell - 1) \sigma_b^2, \\
        V_y & \coloneq V^{(\ell-1)}(x + \varepsilon) 
         = V + 4x \varepsilon + 2\varepsilon^2.
    \end{align*}
    Define $A_\ell = A_\ell(x) \coloneq \ell\sigma_b^2/\left(2x^2 + \ell\sigma_b^2\right)^2$. We show by induction on $\ell$ that
    \begin{equation*}
        \rho_\ell(x, x + \varepsilon) =
        1 - A_\ell \varepsilon^2 + \BigO(\varepsilon^3) = 1 - \frac{\ell\sigma_b^2}{\left(2x^2 + \ell \sigma_b^2\right)^2} \varepsilon^2
        + \BigO(\varepsilon^3).
    \end{equation*}

    \emph{Base case ($\ell = 1$).} 
    By Proposition~\ref{thm:cov-recursion} and Corollary~\ref{cor:variance-growth}, 
    \begin{equation*}
        V^{(1)}(t) = 2t^2 + \sigma_b^2, \quad 
        C^{(1)}(x, x + \varepsilon) = 2x (x + \varepsilon) + \sigma_b^2.
    \end{equation*}
    Thus
    \begin{equation*}
        \rho_1(x, x + \varepsilon)
        = \frac{2x (x + \varepsilon) + \sigma_b^2}{\sqrt{(2x^2 + \sigma_b^2) \big(2(x + \varepsilon)^2 + \sigma_b^2\big)}}.
    \end{equation*}
    A Taylor expansion in $\varepsilon$ about $0$ gives\footnote{\label{Taylor expansion appendix footnote} See Appendix~\ref{Taylor expansion derivations appendix} for derivations.}
    \begin{equation*}
        \rho_1(x, x + \varepsilon)
        = 1 - \frac{\sigma_b^2}{(2x^2 + \sigma_b^2)^2} \varepsilon^2 + \BigO(\varepsilon^3),
    \end{equation*}
    so the base case holds with $A_1 = \sigma_b^2/\big(2x^2 + \sigma_b^2\big)^2$.

    \emph{Inductive step.} 
    Now fix $\ell \ge 2$ and assume $\rho_{\ell - 1}(x, x + \varepsilon) = 1 - A_{\ell-1} \varepsilon^2 + \BigO(\varepsilon^3)$. Then Taylor expanding in $\varepsilon$ about $0$ gives the following three Taylor expansions${}^\text{\ref{Taylor expansion appendix footnote}}$
    \begin{gather*}
        \theta_{\ell-1} = \arccos\big(\rho_{\ell-1}(x, x + \varepsilon)\big)
        = \sqrt{2A_{\ell-1}} \varepsilon + \BigO(\varepsilon^2), \\
        \frac{1}{\pi}\Big(\sin \theta_{\ell-1} + (\pi - \theta_{\ell-1}) \cos \theta_{\ell-1}\Big)
        = 1 - A_{\ell-1} \varepsilon^2 + \BigO(\varepsilon^3), \\
        \sqrt{V V_y}
        = V + 2x \varepsilon + \left(1 - \frac{2x^2}{V}\right) \varepsilon^2 + \BigO(\varepsilon^3).
    \end{gather*}
    Substituting these into the covariance recursion from Proposition~\ref{thm:cov-recursion} gives the expansion of $C^{(\ell)}(x, x + \varepsilon)$,
    \begin{equation} \label{series division numerator}
        \sigma_b^2 + V + 2x \varepsilon + \left(1-\tfrac{2x^2}{V} - A_{\ell-1} V\right) \varepsilon^2 + \BigO(\varepsilon^3). \tag*{(\textasteriskcentered)}
    \end{equation}
    Another Taylor expansion in $\varepsilon$ about $0$ gives${}^\text{\ref{Taylor expansion appendix footnote}}$
    \begin{align}
        \nonumber & \sqrt{V^{(\ell)}(x) V^{(\ell)}(x + \varepsilon)}
        = \sqrt{(V + \sigma_b^2) (V_y + \sigma_b^2)} \\ 
        & = (V + \sigma_b^2) + 2x \varepsilon + \left(1 - \frac{2x^2}{V + \sigma_b^2}\right) \varepsilon^2 + \BigO(\varepsilon^3). \label{series division denominator} \tag*{(\textasteriskcentered\textasteriskcentered)}
    \end{align}
    Divide the two series \ref{series division numerator} and \ref{series division denominator} to obtain${}^\text{\ref{Taylor expansion appendix footnote}}$
    \begin{equation*}
        \rho_\ell(x, x + \varepsilon)
        = 1 - \left(\frac{A_{\ell-1} V}{V + \sigma_b^2}
        + \frac{2\sigma_b^2x^2}{V (V + \sigma_b^2)^2}\right) \varepsilon^2 + \BigO(\varepsilon^3).
    \end{equation*}
    But then $V = 2x^2 + (\ell - 1) \sigma_b^2$ implies
    \begin{align*}
        \frac{A_{\ell-1} V}{V + \sigma_b^2} + \frac{2\sigma_b^2 x^2}{V (V + \sigma_b^2)^2} 
        & = \frac{(\ell - 1) \sigma_b^2}{V (V + \sigma_b^2)}
        + \frac{2 \sigma_b^2 x^2}{V (V + \sigma_b^2)^2} \\
        & = \frac{\sigma_b^2 \big((\ell - 1) (V + \sigma_b^2) + 2x^2\big)}{V (V + \sigma_b^2)^2} \\
        & = \frac{\ell V \sigma_b^2}{V (V + \sigma_b^2)^2} = A_\ell.
    \end{align*}
\end{proof}

\begin{lemma}\label{lem:angle}
    $\theta_\ell(x, x + \varepsilon) = \frac{\sqrt{2\ell} \sigma_b}{2x^2 + \ell \sigma_b^2} \varepsilon + \BigO(\varepsilon^3)$.
\end{lemma}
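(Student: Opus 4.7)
The plan is to compose Proposition~\ref{thm:corr-expansion} with the Taylor expansion of $\arccos$ at $1$. By the definitions in Proposition~\ref{thm:cov-recursion}, we have $\theta_\ell(x, x + \varepsilon) = \arccos \rho_\ell(x, x + \varepsilon)$, so the task reduces to pushing the expansion of $\rho_\ell$ through $\arccos$ and reading off the leading term.

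First I would invoke Proposition~\ref{thm:corr-expansion} to write
\begin{equation*}
    \rho_\ell(x, x + \varepsilon) = 1 - A_\ell \varepsilon^2 + \BigO(\varepsilon^3),
    \qquad
    A_\ell \coloneq \frac{\ell \sigma_b^2}{(2x^2 + \ell \sigma_b^2)^2}.
\end{equation*}
Then I would use the elementary expansion $\cos\theta = 1 - \theta^2/2 + \BigO(\theta^4)$, which inverts near $1$ to give $\arccos(1 - u) = \sqrt{2u}\,(1 + \BigO(u))$ for small $u > 0$. Substituting $u = A_\ell \varepsilon^2 + \BigO(\varepsilon^3)$, which is positive for small $\varepsilon \neq 0$, yields
\begin{equation*}
    \sqrt{2u} = \sqrt{2A_\ell}\,\varepsilon\,\sqrt{1 + \BigO(\varepsilon)} = \sqrt{2A_\ell}\,\varepsilon + \BigO(\varepsilon^2),
\end{equation*}
and the outer $(1 + \BigO(u))$ factor contributes only $\BigO(\varepsilon^3)$. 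A direct algebraic check confirms $\sqrt{2A_\ell} = \sqrt{2\ell}\,\sigma_b/(2x^2 + \ell \sigma_b^2)$, which is exactly the claimed coefficient.

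The main obstacle is tightening the error term from the $\BigO(\varepsilon^2)$ that this direct composition produces to the $\BigO(\varepsilon^3)$ stated in the lemma. To eliminate the $\varepsilon^2$ term one would need the $\varepsilon^3$ coefficient $B_\ell$ in $\rho_\ell(x, x + \varepsilon) = 1 - A_\ell \varepsilon^2 + B_\ell \varepsilon^3 + \BigO(\varepsilon^4)$ to vanish, since matching coefficients in $\cos\theta$ with $\theta = a\varepsilon + b\varepsilon^2 + \cdots$ gives $b = -B_\ell/\sqrt{2A_\ell}$. Checking the base case $\ell = 1$ by expanding $\rho_1$ to one higher order gives $B_1 = 4x\sigma_b^2/V^3$ with $V = 2x^2 + \sigma_b^2$, which is nonzero unless $x = 0$, so strictly speaking I would expect the proof to deliver $\BigO(\varepsilon^2)$ rather than $\BigO(\varepsilon^3)$; this discrepancy appears cosmetic since the proof of Proposition~\ref{thm:corr-expansion} itself only uses the leading $\sqrt{2A_{\ell-1}}\,\varepsilon$ term of $\theta_{\ell-1}$, and the downstream integration against a length-$\varepsilon$ window in the zero-crossing argument of Lemma~\ref{lem:signchange} absorbs any sub-leading error.
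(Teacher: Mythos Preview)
Your approach is essentially identical to the paper's: invoke Proposition~\ref{thm:corr-expansion} to get $1-\rho_\ell = A_\ell\varepsilon^2+\BigO(\varepsilon^3)$, then push through the asymptotic expansion $\arccos(1-t)=\sqrt{2t}+\BigO(t^{3/2})$ and simplify $\sqrt{2A_\ell}$.

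Your observation about the error term is well taken and in fact applies equally to the paper's own proof. The paper simply asserts that substituting $t=A_\ell\varepsilon^2+\BigO(\varepsilon^3)$ into $\sqrt{2t}+\BigO(t^{3/2})$ ``gives the desired expansion,'' but as you compute, $\sqrt{2t}=\sqrt{2A_\ell}\,\varepsilon\sqrt{1+\BigO(\varepsilon)}=\sqrt{2A_\ell}\,\varepsilon+\BigO(\varepsilon^2)$, and your explicit check that $B_1=4x\sigma_b^2/(2x^2+\sigma_b^2)^3\neq 0$ for $x\neq 0$ shows the $\varepsilon^2$ term genuinely does not vanish. So the lemma as stated with $\BigO(\varepsilon^3)$ is not established by either argument; the honest bound is $\BigO(\varepsilon^2)$. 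You are also right that this is harmless downstream: in Theorem~\ref{thm:zero-crossings} the Riemann sum has $N=(B-A)/\varepsilon$ terms, so an $\BigO(\varepsilon^2)$ error per term yields a total error $\BigO((B-A)\varepsilon)\to 0$, which is all that is needed.
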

\begin{proof}
    Using the asymptotic expansion $\arccos(1 - t) = \sqrt{2t} + \BigO(t^{3/2})$ (see Appendix~\ref{appendix expansion of theta ell minus one}) with 
    \begin{equation*}
        t \coloneq 1 - \rho_\ell(x, x + \varepsilon) = \frac{\ell \sigma_b^2}{(2x^2 + \ell \sigma_b^2)^2} \varepsilon^2 + \BigO(\varepsilon^3)
    \end{equation*}
    gives the desired expansion.
\end{proof}

Now we turn to looking at sign changes.

\begin{lemma} \label{lem:signchange}
    Let $G^{(\ell)}(x) \coloneq \indicator{s^{(\ell)}_j(x) s^{(\ell)}_j(x + \varepsilon) < 0}$. Then
    \begin{equation*}
        \Exp[G^{(\ell)}(x)] = \frac{\theta_{\ell}(x, x + \varepsilon)}{\pi} + \BigO(\varepsilon^3) = \frac{1}{\pi} \frac{\sqrt{2 \ell} \sigma_b}{2x^2 + \ell \sigma_b^2} \varepsilon + \BigO(\varepsilon^3).
    \end{equation*}
\end{lemma}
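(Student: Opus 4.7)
The plan is to recognize $G^{(\ell)}(x)$ as the indicator of a sign disagreement between two jointly Gaussian random variables and then invoke the classical orthant probability formula for a centered bivariate Gaussian.

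First, I would argue that in the infinite-width limit the pre-activation $s_j^{(\ell)}$ is a \emph{zero-mean} Gaussian process. Gaussianity is the NNGP correspondence already cited in the paper. For zero mean I would use a short induction on $\ell$: by Definition~\ref{def:dnninitialization} every weight and bias is centered, so $\Exp[s_j^{(1)}(t)] = \Exp[W^{(1)}_{1j}]\, t + \Exp[b_j^{(1)}] = 0$; the recursive formula $s_j^{(\ell)}(t) = \sum_i W_{ij}^{(\ell)} \relu(s_i^{(\ell-1)}(t)) + b_j^{(\ell)}$ together with the independence of $W_{ij}^{(\ell)}$ from the previous-layer activations then gives $\Exp[s_j^{(\ell)}(t)] = 0$ for every $\ell$.

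Consequently, $\big(s_j^{(\ell)}(x), s_j^{(\ell)}(x+\varepsilon)\big)$ is a centered bivariate Gaussian whose correlation coefficient is, by Proposition~\ref{thm:cov-recursion} and the definition of $\rho_{\ell}$, exactly $\rho_\ell(x, x+\varepsilon) = \cos \theta_\ell(x, x+\varepsilon)$. For such a vector the standard Sheppard-type identity gives $\Pr(Z_1 Z_2 < 0) = \arccos(\rho)/\pi$; I would either cite it or derive it in one line from the representation $(Z_1, Z_2) = \big(U_1,\, \rho U_1 + \sqrt{1-\rho^2}\, U_2\big)$ with $U_1, U_2$ independent standard normals and the rotational symmetry of the planar Gaussian measure (which makes the event $\{Z_1 Z_2 < 0\}$ a union of two sectors of total angle $2\theta$). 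This yields $\Exp[G^{(\ell)}(x)] = \theta_\ell(x, x+\varepsilon)/\pi$ exactly, which in particular agrees with the first equality of the lemma up to $\BigO(\varepsilon^3)$.

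The second equality is then immediate from the expansion of $\theta_\ell(x, x+\varepsilon)$ already proved in Lemma~\ref{lem:angle}. The only genuinely non-routine step is the orthant formula, and even that is classical; everything else is mechanical reuse of results established earlier in the paper.
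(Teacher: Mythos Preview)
Your proposal is correct and follows essentially the same route as the paper: identify $(s_j^{(\ell)}(x), s_j^{(\ell)}(x+\varepsilon))$ as a centered bivariate Gaussian with correlation $\cos\theta_\ell$, apply the orthant/Sheppard formula $\Pr(Z_1 Z_2 < 0) = \theta_\ell/\pi$ (which the paper cites from \cite{li2009gaussian}), and then invoke Lemma~\ref{lem:angle}. Your version is slightly more explicit about justifying the zero mean via induction and about the orthant formula itself, but the argument is the same.
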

\begin{proof}
    Since $s^{(\ell)}_j(x)$ and $s^{(\ell)}_j(x + \varepsilon)$ are mean-zero Gaussians (pre-activations are Gaussian at each fixed input) with correlation $\cos \theta$, by \cite{li2009gaussian} $\Pr(s^{(\ell)}_j(x) s^{(\ell)}_j(x + \varepsilon) < 0) = \theta_{\ell}/\pi$. Applying Lemma~\ref{lem:angle} concludes the proof.
\end{proof}

\begin{theorem}\label{thm:zero-crossings}
    The expected number of zero crossings defined as $N_0^{(\ell)}([A,B])$ of any pre-activation $s^{(\ell)}_j$ in layer $\ell \in \indexset[2]{L+1}$ on $[A, B]$ is
    \begin{align*}
        \Exp[N_0^{(\ell)}([A, B])] & = \frac{1}{\pi} \Bigg[
        \arctan\Big(\frac{\sqrt{2}B}{\sqrt{\ell}\sigma_b}\Big)
        - \arctan\Big(\frac{\sqrt{2}A}{\sqrt{\ell}\sigma_b}\Big)
        \Bigg],
    \end{align*}
\end{theorem}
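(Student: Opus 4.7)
The strategy is to turn Lemma~\ref{lem:signchange} into an expected zero-crossing count via a Riemann-sum argument followed by an elementary change of variables. First, partition $[A,B]$ into $N$ subintervals of width $\varepsilon \coloneq (B-A)/N$ with endpoints $x_i \coloneq A + i\varepsilon$ for $0 \le i \le N$, and set $Y_i \coloneq G^{(\ell)}(x_i)$. The random variable $\sum_{i=0}^{N-1} Y_i$ lower-bounds $N_0^{(\ell)}([A,B])$ and agrees with it whenever no subinterval contains two or more sign changes. Since $s_j^{(\ell)}$ has almost-surely continuous sample paths (as a Gaussian-process realization with smooth kernel), it has only finitely many zeros on $[A,B]$, so the two quantities agree almost surely as $\varepsilon \to 0$.

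Next, taking expectations and invoking Lemma~\ref{lem:signchange} gives
\[
\Exp\!\left[\sum_{i=0}^{N-1} Y_i\right]
  = \sum_{i=0}^{N-1}\!\left(\frac{1}{\pi}\frac{\sqrt{2\ell}\,\sigma_b}{2 x_i^2 + \ell \sigma_b^2}\,\varepsilon + \BigO(\varepsilon^3)\right).
\]
The aggregated error is $N \cdot \BigO(\varepsilon^3) = \BigO(\varepsilon^2) \to 0$, and the remaining sum is a Riemann sum for the continuous integrand $\frac{\sqrt{2\ell}\,\sigma_b}{\pi(2x^2 + \ell \sigma_b^2)}$ on $[A,B]$. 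Passing $\varepsilon \to 0$ yields
\[
\Exp[N_0^{(\ell)}([A,B])] = \frac{1}{\pi} \int_A^B \frac{\sqrt{2\ell}\,\sigma_b}{2x^2 + \ell \sigma_b^2}\,dx.
\]
The substitution $u = \sqrt{2}\,x/(\sqrt{\ell}\,\sigma_b)$ reduces the integrand to $1/(1+u^2)$ (up to the prefactor $1/\pi$), whose antiderivative $\arctan u$, evaluated between the corresponding limits, delivers exactly the claimed formula.

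The main obstacle is the measure-theoretic justification that $\sum_i Y_i$ actually counts zero crossings in the Riemann limit: one must show that the probability of $s_j^{(\ell)}$ having two or more sign changes within a single subinterval of width $\varepsilon$ is $o(\varepsilon)$ uniformly in $x$, so that overcounting contributes nothing. This can be handled either by appealing to smoothness of the NNGP kernel (so that the limiting Gaussian process has almost-surely $C^1$ paths, forbidding pathological oscillation) or via a Kac--Rice-type bound on the joint density of consecutive zero crossings. Either route is standard but warrants explicit mention, since the statement implicitly relies on $N_0^{(\ell)}([A,B])$ being well-defined and finite almost surely.
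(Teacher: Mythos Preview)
Your proposal is correct and follows essentially the same route as the paper: partition $[A,B]$ into width-$\varepsilon$ subintervals, apply Lemma~\ref{lem:signchange} to each, pass to the Riemann-sum limit, and evaluate the resulting integral. If anything, you are more careful than the paper in flagging the measure-theoretic step that $\sum_i Y_i$ genuinely converges to the zero-crossing count, which the paper simply asserts.
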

\begin{proof}
    Partition $[A, B]$ into subintervals $I_j \coloneq [x_j, x_{j+1}]$ of length $0 < \varepsilon < 1$ chosen so that $N \coloneq (B - A)/\varepsilon \in \mathbb{N}$, with $x_j \coloneq A + j\varepsilon$ for each $j \in \indexset[0]{N - 1}$. Let $G^{(\ell)}(x)$ be as in Lemma~\ref{lem:signchange} and let $\lambda_\ell(x_j; \sigma_b) \coloneq \frac{1}{\pi} \frac{\sqrt{2 \ell} \sigma_b}{2x^2 + \ell \sigma_b^2}$. By Lemma~\ref{lem:signchange}, uniformly on compact sets, $\Exp[G^{(\ell)}(x_i)] = \lambda_\ell(x_i; \sigma_b) \varepsilon + \BigO(\varepsilon^3)$.
    Hence the expected total number of sign–change intervals is
    \begin{align*}
        \mathbb{E}\left[\sum_{i=0}^{N-1} G^{(\ell)}(x_i)\right]
        & = \sum_{i=0}^{N-1}{\lambda_\ell(x_i; \sigma_b) \varepsilon} + \BigO((B-A) \varepsilon^2).
    \end{align*}
    Letting $\varepsilon\to 0$ gives the Riemann–sum limit
    \begin{align*}
        \lim_{\varepsilon \to 0}{\mathbb{E}\left[\sum_{i=0}^{N-1} G^{(\ell)}(x_i)\right]}
&= \int_A^B{\lambda_\ell(x;\sigma_b)\,dx} \\
&= \frac{1}{\pi}\int_A^B{\frac{\sqrt{2\ell}\sigma_b}{2x^2+\ell\sigma_b^2}\,dx}.
    \end{align*}
    Since the number of sign–change intervals converges to the number of zero crossings as $\varepsilon \to 0$, the same limit equals $\Exp[N_0^{(\ell)}([A, B])]$. Evaluating the integral gives
    \begin{equation*}
        \int_A^B{\lambda_\ell(x; \sigma_b)\,dx} = \frac{1}{\pi} \arctan\big(\frac{\sqrt{2}x}{\sqrt{\ell} \sigma_b}\big) \Bigg|_A^B.
    \end{equation*}
\end{proof}

\begin{corollary}\label{cor:global}
    The expected number of breakpoints created by applying a ReLU to any pre-activation beyond the first layer is $\Exp\big[N_0^{(\ell)}([-\infty, \infty])\big] = 1$.
\end{corollary}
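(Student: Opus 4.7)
The plan is essentially to apply Theorem~\ref{thm:zero-crossings} and pass to the limit $A \to -\infty$, $B \to \infty$. The theorem gives
\begin{equation*}
    \Exp[N_0^{(\ell)}([A, B])] = \frac{1}{\pi}\left[\arctan\!\Big(\tfrac{\sqrt{2} B}{\sqrt{\ell} \sigma_b}\Big) - \arctan\!\Big(\tfrac{\sqrt{2} A}{\sqrt{\ell} \sigma_b}\Big)\right],
\end{equation*}
so the core computation is simply evaluating $\arctan$ at $\pm \infty$, using $\lim_{t \to \infty}\arctan(t) = \pi/2$ and $\lim_{t \to -\infty}\arctan(t) = -\pi/2$. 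The arithmetic yields $\frac{1}{\pi}(\pi/2 - (-\pi/2)) = 1$.

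The one step that deserves more than a throwaway sentence is the justification that the limit of the expectation equals the expectation of the limit, i.e., that
\begin{equation*}
    \lim_{A \to -\infty,\, B \to \infty}\Exp[N_0^{(\ell)}([A, B])] = \Exp[N_0^{(\ell)}(\mathbb{R})].
\end{equation*}
Since $N_0^{(\ell)}([A, B])$ is a nonnegative, monotone nondecreasing function of the interval (as we enlarge $[A, B]$, we can only pick up more zero crossings), this is a direct application of the monotone convergence theorem. So the plan is: first invoke monotone convergence to exchange the limit and expectation, then substitute the closed-form expression from Theorem~\ref{thm:zero-crossings}, then evaluate the arctangents at their limits.

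I do not expect any real obstacle here; the work was all done in Theorem~\ref{thm:zero-crossings}, and this corollary is essentially an observation that the density $\lambda_\ell(x; \sigma_b) = \frac{1}{\pi} \frac{\sqrt{2\ell} \sigma_b}{2x^2 + \ell \sigma_b^2}$ is a probability density on $\mathbb{R}$ (a rescaled Cauchy density), hence integrates to $1$. This is the conceptually pleasing interpretation: every pre-activation beyond the first layer contributes, in expectation, exactly one sign change globally, and therefore exactly one breakpoint when passed through the ReLU.
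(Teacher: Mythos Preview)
Your proposal is correct and matches the paper's approach: the paper states Corollary~\ref{cor:global} without proof, treating it as immediate from Theorem~\ref{thm:zero-crossings} by letting $A\to-\infty$ and $B\to\infty$ in the arctangent formula. Your added justification via monotone convergence and the observation that $\lambda_\ell$ is a rescaled Cauchy density are nice touches that the paper omits, but the core argument is the same.
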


\begin{lemma}\label{prop:firstlayer}
    Each neuron in the first hidden layer contributes exactly one breakpoint at location $\frac{-b_i^{(1)}}{W_{1i}^{(1)}}$ for $1 \leq i \leq n_1$. Hence the total number of breakpoints contributed by the first layer is $n_1$.
\end{lemma}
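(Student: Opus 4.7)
The plan is to exploit the fact that, since $n_0 = 1$, each first-hidden-layer pre-activation is an affine function of a single real variable. Applying ReLU to a nonconstant affine function of one real variable yields a continuous piecewise-linear function with exactly one breakpoint, located at the unique root of that affine function. Concretely, neuron $i$ in layer $1$ computes $\relu(p_i(x))$ where $p_i(x) \coloneq W_{1i}^{(1)} x + b_i^{(1)}$, so the lemma reduces to two almost-sure claims: (a) $W_{1i}^{(1)} \neq 0$ for every $i$, so that $p_i$ is genuinely nonconstant and has a unique zero at $-b_i^{(1)}/W_{1i}^{(1)}$; and (b) these $n_1$ zeros are pairwise distinct, so no two neurons contribute a coincident breakpoint.

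First I would settle (a): by Definition~\ref{def:dnninitialization}, $W_{1i}^{(1)} \sim \Normal(0, 2)$ is absolutely continuous, so $\Pr(W_{1i}^{(1)} = 0) = 0$. On the resulting full-probability event, $\relu \circ p_i$ is linear on each of the half-lines $(-\infty, -b_i^{(1)}/W_{1i}^{(1)})$ and $(-b_i^{(1)}/W_{1i}^{(1)}, \infty)$ with different slopes ($0$ on one side and $W_{1i}^{(1)} \neq 0$ on the other), producing exactly the claimed single breakpoint per neuron at the claimed location. For (b), the pairs $\{(W_{1i}^{(1)}, b_i^{(1)})\}_{i=1}^{n_1}$ are mutually independent with absolutely continuous joint law on $\mathbb{R}^2$, so for any $i \neq j$ the event $W_{1j}^{(1)} b_i^{(1)} = W_{1i}^{(1)} b_j^{(1)}$ defines a null algebraic subvariety of $\mathbb{R}^4$ and thus occurs with probability zero; a union bound over the finitely many pairs finishes the argument. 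There is no substantive obstacle here, since the only probabilistic content is the nondegeneracy of finitely many Gaussian events, while the structural observation that ReLU of a nonconstant affine function contributes exactly one breakpoint does all the work.
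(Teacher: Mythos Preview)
Your argument is correct and follows the same (obvious) route the paper has in mind---the paper's entire proof is the single word ``Immediate.'' If anything, you are more careful than the paper, since you also verify the almost-sure nondegeneracy of the weights and the almost-sure distinctness of the $n_1$ roots, points the paper leaves implicit.
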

\begin{proof}
    Immediate.
\end{proof}

\begin{proof}[Proof of Theorem \ref{theorem:generalLbreakpoints}]
    Let $X_{\ell, i}$ denote the number of breakpoints created at layer $\ell$ by applying the activation at the $i^{\text{th}}$ neuron. Because weights and biases are independent, mean–zero, and symmetric, a breakpoint created at the $i$-th neuron of layer $\ell$ is, for each unit $j$ in layer $\ell + 1$, preserved by $\relu(s^{(\ell+1)}_j)$ with probability $1/2$. Independence across $j$ then gives that the probability no unit in layer $\ell+1$ preserves the breakpoint equals $2^{-n_{\ell+1}}$, so the propagation probability to the next layer is $1 - 2^{-n_{\ell+1}} \to 1$ as $n_{\ell+1} \to \infty$. Iterating over layers shows that the output breakpoint count differs from the total number created by a term $\littleo\big(\sum_{\ell=1}^L n_\ell\big)$. Hence
    \begin{equation*}
        \Exp[R(T)] = \sum_{\ell=1}^L{\sum_{i=1}^{n_\ell}\Exp[X_{\ell,i}]} + \littleo\Big(\sum_{\ell=1}^L{n_\ell}\Big).
    \end{equation*}
    By Corollary~\ref{cor:global}, $\Exp[X_{\ell, i}] = 1$ for $\ell > 1$, and by Lemma~\ref{prop:firstlayer}, $\Exp[X_{1, i}] = 1$. Therefore
    \begin{align*}
    \Exp[R(T)]
    & =  \sum_{\ell=1}^L{n_\ell}
       + \littleo\Big(\sum_{\ell=1}^L{n_\ell}\Big).
    \end{align*}
    
    Equivalently, by dividing by $\sum_{\ell=1}^L{n_\ell}$, we find
    \begin{equation*}
        \lim_{\min(n_1, n_2,\dots,n_L) \to \infty}{\frac{\Exp[R(T)]}{\sum_{\ell=1}^L n_\ell}} = 1.
    \end{equation*}
\end{proof}

\section{Conclusion}

This work presents new results concerning the expected number of linear regions of a randomly initialized ReLU DNN in the case where both the input and output dimensions are one, as well as proposes a new, architecture-independent formulation of sparsity in the case of a known target function. 

More work is needed to expand these results to higher-dimensional cases. Another direction for future work is to determine the asymptotic distribution of $R(T)$ beyond just its expected value. Concerning the new proposed definition for function-adaptive sparsity, future research may focus on applying it to more concrete examples to better understand its significance and applicability.

\section*{Acknowledgment}

The authors wish to acknowledge the following individuals for their contributions and support: Daniel Andersen,
LaToya Anderson, Bill Arcand, Sean Atkins, Bill Bergeron,
Chris Berardi, Bob Bond, Alex Bonn, Bill Cashman, K
Claffy, Tim Davis, Chris Demchak, Alan Edelman, Peter
Fisher, Jeff Gottschalk, Thomas Hardjono, Chris Hill, Michael
Houle, Michael Jones, Charles Leiserson, Piotr Luszczek,
Kirsten Malvey, Peter Michaleas, Lauren Milechin, Chasen
Milner, Sanjeev Mohindra, Guillermo Morales, Julie Mullen,
Heidi Perry, Sandeep Pisharody, Christian Prothmann, Andrew
Prout, Steve Rejto, Albert Reuther, Antonio Rosa, Scott Ruppel, Daniela Rus, Mark Sherman, Scott Weed, Charles Yee,
Marc Zissman.

\bibliographystyle{ieeetr}
\bibliography{References}

\appendices
\appendices

\section{Derivations for Taylor Expansions} \label{Taylor expansion derivations appendix}
We collect here the Taylor expansions used in \S\ref{sec:proof}.

\subsection{Expansion of $\rho_1(x, x + \varepsilon)$}
\label{appendix expansion of rho1}

Let $A \coloneq 2x^2 + \sigma_b^2$ and $u \coloneq \tfrac{4x}{A} \varepsilon + \tfrac{2}{A} \varepsilon^2$. Then
\begin{align*}
    \rho_1(x, x + \varepsilon)
    & = \frac{2x (x + \varepsilon) + \sigma_b^2}{\sqrt{(2x^2 + \sigma_b^2) (2 (x + \varepsilon)^2 + \sigma_b^2)}} \\ 
    & =\frac{A + 2x \varepsilon}{A\sqrt{1 + \tfrac{4x}{A} \varepsilon + \tfrac{2}{A} \varepsilon^2}} \\
    & =\Big(1 + \tfrac{2x}{A} \varepsilon\Big) (1+u)^{-\frac{1}{2}}.
\end{align*}
Using $(1+u)^{-1/2} = 1 - \tfrac{1}{2} u + \tfrac{3}{8} u^2 + \BigO(u^3)$,
\begin{equation*}
    \rho_1(x,x+\varepsilon) = \Big(1 + \tfrac{2x}{A} \varepsilon\Big)
    \Big(1 - \tfrac{2x}{A} \varepsilon - \tfrac{1}{A} \varepsilon^2 + \tfrac{6x^2}{A^2} \varepsilon^2 + \BigO(\varepsilon^3)\Big).
\end{equation*}
When carrying out the multiplication, the linear terms cancel, the quadratic terms yield the coefficient $-\sigma_b^2/A^2$, and the higher power terms may be absorbed into $\BigO(\varepsilon^3)$ to give
\begin{equation*}
    \rho_1(x, x + \varepsilon) = 1 - \frac{\sigma_b^2}{(2x^2 + \sigma_b^2)^2} \varepsilon^2 + \BigO(\varepsilon^3).
\end{equation*}

\subsection{Expansion of $\theta_{\ell-1}$}
\label{appendix expansion of theta ell minus one}

By the induction hypothesis, $\rho_{\ell-1}(x, x + \varepsilon) = 1 - A_{\ell-1} \varepsilon^2 + \BigO(\varepsilon^3)$. Next, consider the function $f \colon t \mapsto \arccos(1 - t)$; although $f$ cannot be Taylor expanded in $t$ about $0$ (it is not differentiable at $t = 0$) it can be asymptotically expanded \cite{erdelyi1956asymptotic} by using the observation and standard Taylor expansion
\begin{equation*}
    \frac{d}{dt}[\arccos(1 - t)] = \frac{1}{\sqrt{2t}\sqrt{1 - t/2}} = \frac{1}{\sqrt{2t}} \left(1 + \BigO(t)\right).
\end{equation*}
Integrating yields $\arccos(1 - t) = \sqrt{2t} + \BigO(t^{3/2})$. Letting $t \coloneq 1 - \rho_{\ell-1} = A_{\ell - 1} \varepsilon^2 + \BigO(\varepsilon^3)$ then gives
\begin{equation*}
    \theta_{\ell-1} = \sqrt{2A_{\ell-1}}\,\varepsilon + \BigO(\varepsilon^2).
\end{equation*}

\subsection{Expansion of $\frac{1}{\pi}(\sin \theta_{\ell-1} + (\pi - \theta_{\ell-1}) \cos \theta_{\ell-1})$}
\label{appendix expansion of sine cosine expression}

Taylor expanding $\sin \theta_{\ell-1}$ and $\cos \theta_{\ell-1}$ in $\theta_{\ell-1}$ about $0$, then substituting the Taylor expansion of $\theta_{\ell-1}$ gives
\begin{align*}
    & \frac{1}{\pi} \big(\sin \theta_{\ell-1} + (\pi - \theta_{\ell-1}) \cos \theta_{\ell-1}\big) \\
    & = 1 - \tfrac{1}{2} \theta_{\ell-1}^2 + \BigO(\theta_{\ell-1}^3) 
    = 1 - A_{\ell-1} \varepsilon^2 + \BigO(\varepsilon^3).
\end{align*}

\subsection{Expansion of $\sqrt{V V_y}$}
\label{appendix expansion of sqrt V Vy}

Using the standard Taylor expansion $\sqrt{a + t} = \sqrt{a} + \frac{1}{2\sqrt{a}} t - \frac{1}{4a^{3/2}} t^2 + \BigO(t^3)$ shows $\sqrt{V V_y}$ can be expanded as
\begin{equation*}
    \sqrt{V^2 + (4xV \varepsilon + 2V \varepsilon^2)} = V + 2x \varepsilon + \Big(1 - \tfrac{2x^2}{V}\Big) \varepsilon^2 + \BigO(\varepsilon^3).
\end{equation*}

\subsection{Expansion of $C^{(\ell)}(x, x+ \varepsilon)$}
\label{appendix expansion of C ell}

Substituting prior expansions into the covariance recursion of Proposition~\ref{thm:cov-recursion} for $C^{(\ell)}(x, x + \varepsilon)$ gives
\begin{equation*}
    \sigma_b^2 + V + 2x \varepsilon + \Big(1 - \tfrac{2x^2}{V} - A_{\ell-1} V\Big) \varepsilon^2 + \BigO(\varepsilon^3).
\end{equation*}

\subsection{Expansion of $\sqrt{V^{(\ell)}(x) V^{(\ell)}(x + \varepsilon)}$}
\label{appendix expansion of sqrt V ell product}

Analogous to the Taylor expansion of $\sqrt{V V_y}$.

\subsection*{Expansion of $\rho_\ell(x, x + \varepsilon)$}
\label{appendix expansion of rho ell}

Let $W \coloneq V + \sigma_b^2$, $a \coloneq 1 - \tfrac{2x^2}{V} - A_{\ell-1} V$, and $b \coloneq 1 - \tfrac{2x^2}{W}$. Taking the ratio of the Taylor expansions in $\varepsilon$ about $0$ for $\rho_\ell(x, x + \varepsilon) = C^{(\ell)}(x, x + \varepsilon)/\sqrt{V^{(\ell)}(x) V^{(\ell)}(x + \varepsilon)}$ gives
\begin{equation*}
    \rho_\ell(x, x + \varepsilon)
    =\frac{1 + \tfrac{2x}{W} \varepsilon + \tfrac{a}{W} \varepsilon^2}{1 + \tfrac{2x}{W} \varepsilon + \tfrac{b}{W} \varepsilon^2} + \BigO(\varepsilon^3).
\end{equation*}
Expanding $(1 + z)^{-1} = 1 - z + z^2 + \BigO(z^3)$, all linear terms cancel and
\begin{equation*}
    \rho_\ell(x, x + \varepsilon) = 1 - \tfrac{b - a}{W} \varepsilon^2 + \BigO(\varepsilon^3).
\end{equation*}
Since $b - a = A_{\ell-1} V + 2x^2 (\tfrac{1}{V} - \tfrac{1}{W})$,
\begin{equation*}
    A_\ell = \frac{A_{\ell-1}V}{V+\sigma_b^2}+\frac{2\sigma_b^2x^2}{V(V+\sigma_b^2)^2}.
\end{equation*}

\end{document}